\theoremstyle{definition}
\newtheorem{definition}{Definition}[section]
\newtheorem{assumption}{Assumption}[section]
\theoremstyle{theorem}
\newtheorem{theorem}{Theorem}[section]
\setlist{
	leftmargin=4.2mm
} 
\newcommand{\subparagraph}{}
\titlespacing*{\subsection}{0pt}{0.15\baselineskip}{0.05\baselineskip}
\titlespacing*{\section}{0pt}{0.6\baselineskip}{0.5\baselineskip}
\setlist[itemize]{topsep=0pt, partopsep=0pt}
\newcommand{\argmin}{\operatornamewithlimits{arg\,min}}
\newcommand{\argmax}{\operatornamewithlimits{arg\,max}}
\DeclarePairedDelimiter\norm{\lVert}{\rVert}
\begin{document}
%
\title{\LARGE \bf Multi-Agent Safe Planning with Gaussian Processes}

\author{Zheqing Zhu$^{1}$, Erdem B\i y\i k$^{2}$ and Dorsa Sadigh$^{2,3}$%
	\thanks{Emails: \{{\tt\small zheqzhu}, {\tt\small ebiyik}, {\tt\small dorsa}\}{\tt\small @stanford.edu}}%
	\thanks{$^{1}$Management Science \& Engineering, Stanford University, CA, USA}%
	\thanks{$^{2}$Electrical Engineering, Stanford University, CA, USA}%
	\thanks{$^{3}$Computer Science, Stanford University, CA, USA}%
}

\maketitle

\begin{abstract}
Multi-agent safe systems have become an increasingly important area of study as we can now easily have multiple AI-powered systems operating together. In such settings, we need to ensure the safety of not only each individual agent, but also the overall system. In this paper, we introduce a novel multi-agent safe learning algorithm that enables decentralized safe navigation when there are multiple different agents in the environment. This algorithm makes mild assumptions about other agents and is trained in a decentralized fashion, i.e. with very little prior knowledge about other agents' policies. Experiments show our algorithm performs well with the robots running other algorithms when optimizing various objectives.
\end{abstract}


%
\IEEEpeerreviewmaketitle

\section{Introduction}
Safety in multi-agent systems is vital in collaborative tasks. Even when the agents can observe each other and know the dynamics of their environment, operating in a decentralized manner without knowing each other's policy makes it very challenging to guarantee safety. While it is a difficult problem, such safety-critical systems with multiple agents are commonly observed in autonomous driving \cite{cao2013overview,fisac2019hierarchical}, collaborative quadrotor control \cite{viseras2016decentralized,bahnemann2017decentralized,wang2018cooperative}, multi-robot coordination \cite{amato2015probabilistic,amato2015planning,omidshafiei2015decentralized}. Ensuring the safety in these environments is extremely critical as the failures can cause damage not only to the agents themselves, but also to the environment. 

We specifically study tasks where state-action pairs of the agents, and dynamics of the system are known, but agents are decentralized, i.e. they do not know other agents' policies. We decompose safety into two parts as \emph{individual} and \emph{joint} safety. 
There are many real-world use cases that fit into this framework. Consider a Mars exploration task where multiple rovers explore a region together, as visualized in Fig.~\ref{fig.frontfig}, without explicit communication for increasing energy efficiency and avoiding delays due to communication. Individual safety could be each rover avoiding environment obstacles or steep terrain, and joint safety might refer to keeping the distance between rovers in a specific range to avoid collisions. The setting also works for autonomous cars that are trying to avoid collisions while optimizing each car's speed, and competitive robot teams where two teams are competing in a game setting, e.g. soccer game, but do not want to collide with one another to cause fatal damage.

\begin{figure}[h]
	\centering
	\includegraphics[width=0.8\columnwidth]{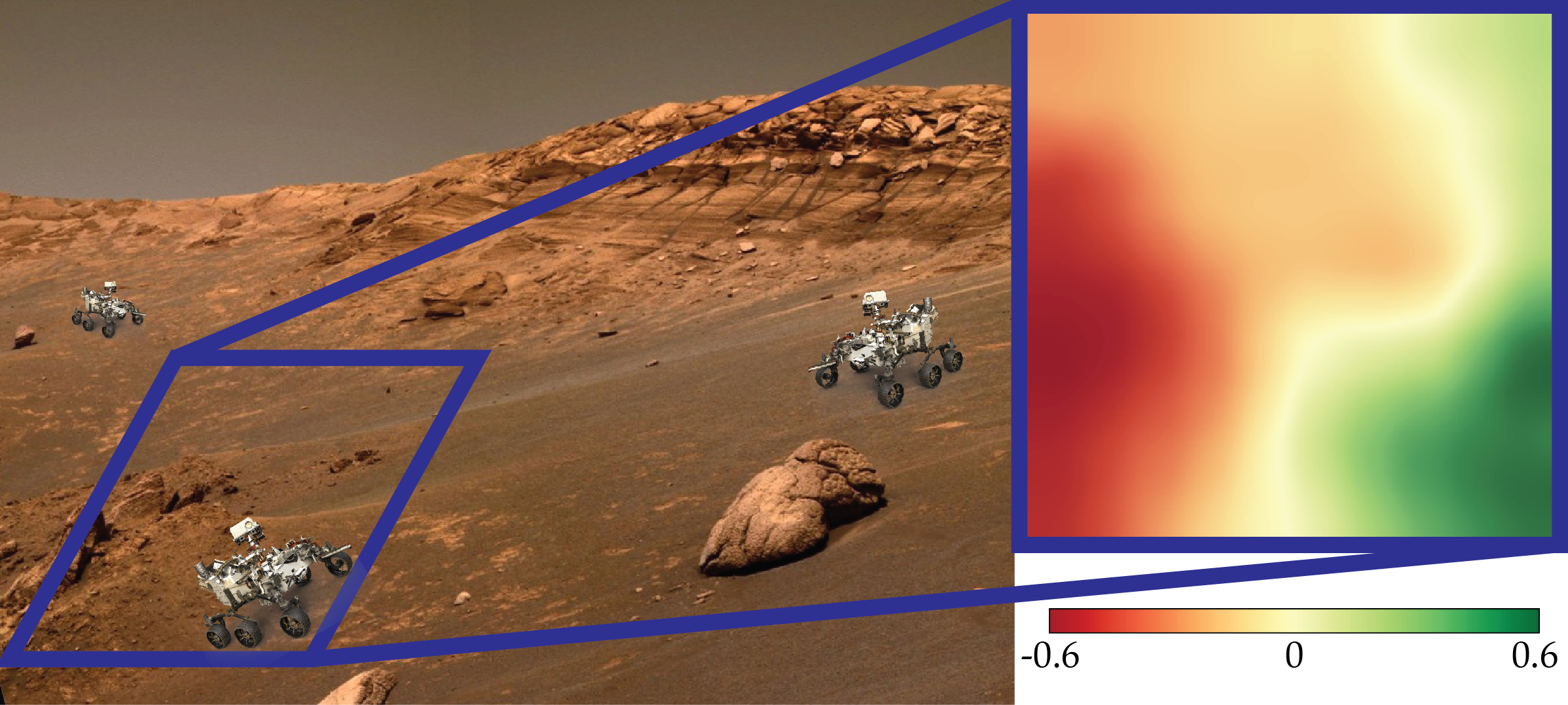}
	\caption{Representative figure for three Mars rovers exploring a region on the surface of Mars. Colors represent the normalized altitudes of the terrain such that low altitude regions shown as red are unsafe for the rovers. It is also unsafe if two rovers operate in the same region due to the risk of collisions.}
	\vspace{-10px}
	\label{fig.frontfig}
\end{figure}

While assuming a centralized controller enables us to formulate the problem as a single-agent problem, it is not realistic in practice, because such systems do not scale well with the number of agents in the environment: both state and action spaces grow exponentially with the number of agents. On the other hand, as opposed to many prior works, most real-world tasks require continuous state spaces. Therefore, we would like to \emph{enable multiple decentralized agents to operate in an environment with a continuous state space without getting into individually or jointly unsafe states}.

Such a problem would naturally fit into a relaxed subset of decentralized partially observable Markov Decision Process (Dec-POMDP) framework \cite{amato2013decentralized} where the relaxation is due to full observability and independence of agents. However, not knowing other agents' policies makes the problem difficult. We attempt to learn other agents' policies, which eases the objective of avoiding jointly unsafe situations. Specifically, we use Gaussian Processes (GP) both for estimating other agents' actions, which then enables us to increase the probability joint safety, and for modeling the individual safety. We model the risk using confidence bounds, again both for other agents' possible actions, and for individual safety.

Our contributions in this paper are the following:
\begin{itemize}[nosep]
	\item We develop a novel decentralized multi-agent planning algorithm on continuous state space to achieve overall system safety more often than the existing algorithms.
	\item We show our algorithm has linear time complexity on the actions space size and polynomial time complexity on the number of visited states.
	\item Experiments show robots with our algorithm safely collaborate for exploration and exploitation with agents running standard planning and reinforcement learning algorithms.
\end{itemize}

\section{Related Work}
\textbf{Safe Exploration.}
Single-agent safe exploration has been extensively studied. \citet{turchetta2016safe} established the \textsc{SafeMDP} algorithm for deterministic Markov Decision Processes (MDP) with discrete state spaces by assuming the risk value of each state is under some regularity that allows the use of GPs. \citet{wachi2018safe}, again employing GPs, extended the work to both exploration and exploitation. Using a similar idea, \citet{berkenkamp2016safe} established a parameter exploration algorithm under multiple constraints, that would safely tune robots’ parameters, as an extension to earlier work \cite{sui2015safe}. More recently, \citet{biyik2019efficient} leveraged continuity assumptions to deterministically guarantee safety for efficient exploration in unknown environments. \citet{bajcsy2019efficient} and \citet{fridovich2019safely} developed reachability-based frameworks for safe navigation, again in unknown environments. While we employ many similar ideas, all of these works focused only on single-agent settings, whereas ensuring safety in decentralized multi-agent systems require modeling the other agents in the environment.

\noindent\textbf{Safe Reinforcement Learning.}
On the safe reinforcement learning (RL), \citet{basu2008learning} developed a learning algorithm to handle risk-sensitive cost. \citet{geibel2005risk} formulated the risk as a second criterion based on cumulative return. \citet{moldovan2012safe} proposed an algorithm that constrains the attention to the guaranteed safe policies. \citet{fisac2018general} proposed a framework using reachability methods for guaranteeing safety during learning. \citet{berkenkamp2017safe} developed a framework for model-based RL using Lyapunov stability verification. However these works, too, focused only on single-agent settings. We refer to \cite{garcia2015comprehensive} for a comprehensive survey on safe RL.

\noindent\textbf{Multi-Agent Reinforcement Learning.}
Many recent works studied how to train multiple robots that will operate in a decentralized manner \cite{da2018autonomously}. They developed various techniques and got successful results in several different tasks, such as simulated navigation \cite{lowe2017multi}, video games \cite{foerster2018counterfactual}, target tracking \cite{zhang2013coordinating}, soccer \cite{le2017coordinated}, etc. However, these tasks are either not safety-critical, or safety is hard-coded, which requires careful analysis and design. \citet{fisac2015reach} proposed an extension of Hamilton-Jacobi methods on reach-avoidance problem. While their approach specifies conventions agents typically follow, we make only very weak assumptions about other agents.

\noindent\textbf{Intent Inference in Multi-Agent Settings.}
Predicting other agents' actions has been studied in the context of intent inference (and theory of mind \cite{devin2016implemented,hellstrom2018understandable}). \citet{bai2015intention} demonstrated it is possible to autonomously drive in a crowd by estimating the intentions of pedestrians. \citet{sadigh2016information} developed a method to enable the learning of other agents' internal states by actively probing them. Both of these works rely on the assumptions about the models of other agents' policies. Recently, \citet{song2018multi} proposed a multi-agent extension of generative adversarial imitation learning, which can help learn other agents' policies after observing a few instances. However, this is mostly limited to offline settings as it requires large computation powers to learn the policies.

\section{Problem Definition}
In this section, we formalize the multi-agent safe exploration problem and our key assumptions. In our setting, multiple robots interact with each other by \emph{simultaneously} taking actions that explore a shared environment. Each robot should take only safe actions that not only satisfy the individual safety constraints, but also avoid moving the overall system to a jointly undesirable configuration.

The key challenge is that each robot needs to act simultaneously in a decentralized manner. Without prior knowledge of other agents' policies, they make decisions based on their own policies after observing the previous states and actions of all the agents. \emph{Our goal is to develop such a decentralized strategy to safely navigate in the environment.} 
 


We model this system as a Markov Decision Process with multiple agents (MDP-MA), where the agents share the same environment but their transitions are factorized.

\begin{definition}\label{mdp_def}
\textbf{(MDP-MA)} An MDP-MA with $N$ agents is defined by a tuple $(\mathcal{S}, \mathcal{A}, f, r)$. $\mathcal{S}$ is a continuous set of states, where $\mathbf{s}_t = (s^1_t, s^2_t, ..., s^N_t) \in \mathcal{S}^N$ represents the state of all $N$ agents at time $t$.
We note the state of each agent $s_t^i$ lies in $\mathcal{S}$.
Similarly, $\mathcal{A}$ is the discrete set of actions, i.e., $\mathbf{a}_t = (a^1_t, a^2_t, ..., a^N_t) \in \mathcal{A}^N$. $f$ is a probability distribution such that $f(s_{t+1}|s_t,a_t)$ is the probability of reaching $s_{t+1}$ from $s_t$ with action $a_t$. All agents act synchronously, so $\mathbf{s}_{t+1} \sim f(\cdot | \mathbf{s}_t,\mathbf{a}_t) = [f(\cdot|s^1_t,a^1_t), \dots, f(\cdot|s^N_t,a^N_t)]$. Our formulations can be generalized to the settings where the action spaces are state- or agent-dependent.
$r: \mathcal{S} \to \mathbb{R}$ is the unknown reward function shared by all the agents. In terms of rewards, agent $i$ can only observe $r(s^i_t) + w_t^i$ at time step $t$, where $w_t^i \sim \mathcal{N}(0, {\eta^i}^2)$. 
\end{definition}
\vspace{-5px}

The goal of each agent is to take actions that optimize its own reward while \emph{safely} planning in the environment. We now formalize the notion of \emph{safety} in this MDP-MA.
\begin{definition}\label{safe_def} \textbf{(Safety)}
A state $s$ is \emph{individually safe} if and only if $r(s) \geq h$, for some safety threshold $h$. In addition to individual safety, a set $U\subset S^N$ defines \emph{jointly unsafe states}.
\end{definition}
\vspace{-5px}
We want to note two important points. First, it is possible to have $(s^1_t, \dots, s^N_t)\in U$, even though $r(s^i_t) \geq h$ for all $i \in \{1,\dots, N\}$. For example, a specific location might be individually safe for drones, but having multiple drones in that location might cause catastrophic collisions. Second, as the reward is a function of individual states, jointly unsafe states are not induced by the reward function.

\begin{assumption} \textbf{(Observability Assumptions)}
Following \cite{turchetta2016safe} and \cite{wachi2018safe}, we assume:
\begin{itemize}[nosep]
	\item All agents know the set $U$, the safety threshold $h$, the initial state $\mathbf{s}_0\! \in\! S^N\!\setminus\! U$, and the dynamics $f$.
	\item At any time step $t$, all agents observe the states $\mathbf{s}_t$ and the actions $\mathbf{a}_t$.
\end{itemize}
\end{assumption}
\vspace{-5px}
Since $f$, the transition distribution, is known at all times to all agents, any agent could estimate the next state $\mathbf{s}_{t+1}$ based on its information about the MDP-MA if it could predict the actions of other agents accurately. However, the agents do not have prior information about other agents' policies.

\begin{assumption}\label{GP_assumption} \textbf{(Reward Function Assumptions)} With no assumption on $r$, an agent cannot learn other agents' policies without repeatedly observing all possible state configurations. Therefore,
\begin{itemize}[nosep]
	\item We assume that $\mathcal{S}$ is endowed with a positive definite kernel function $k_{r}(s, s')$ and that $r(s)$ has bounded norm in the associated Reproducing Kernel Hilbert Space (RKHS).
	\item We also assume $L$-Lipschitz continuity of the reward function $r$ with respect to some metric $d(\cdot, \cdot)$ on $\mathcal{S}$. This is guaranteed by many commonly used kernels with high probability \cite{ghosal2006posterior, srinivas2010gaussian}. 
\end{itemize}  
\end{assumption}
\vspace{-5px}
\textbf{Objective.} Given the problem definition and assumptions, our goal is to achieve a predefined objective, e.g. maximizing the number of states explored or maximizing cumulative reward, while avoiding individually and jointly unsafe states.

\section{Multi-Agent SafeMDP Algorithm}
We formalize individual and joint safety separately and use confidence bounds to determine whether or not a constraint is satisfied. Sec.~\ref{iterative_safemdp} introduces a method to find the states that satisfy individual safety with high confidence and Sec.~\ref{agent_model} explains our approach for modeling other agents' policies to achieve joint safety.

\subsection{Iterative SafeMDP}\label{iterative_safemdp}
Our algorithm finds the most desirable one-step reachable state with high probability of being individually safe, as well as being returnable to the previously found safe states within one step.

We use a Gaussian Process (GP) to model the reward function on the state space for the agent running our algorithm \cite{srinivas2010gaussian}. Given Assumption~\ref{GP_assumption}, we model the reward function using a GP as
\begin{align}
r \sim \mathcal{GP}_r(\mu_r, k_r),
\end{align}
where $\mu_r(s)$ is the mean function and $k_r(s, s')$ is the covariance function. We assume the prior mean $\mu_r$ of all states is $0$. The variance $\sigma^2_r(s)=k_r(s,s)$ encodes the noise of the environment from observations and our uncertainty. Our algorithm updates $\mathcal{GP}_r$ after every reward observation and utilizes it to estimate the reward of the next state.  We refer to \cite{rasmussen2004gaussian} for GP posterior update with new observations. We define the confidence bounds of $\mathcal{GP}_r$ as ${C_r}_t(s) = [{\mu_r}_t(s) \pm \beta_r(t) {\sigma_r}_t(s)]$ for some $\beta_r(t)\geq0$. We denote 
\begin{align}
\begin{split}
r^{\mu}_t(s) &= \mu_{r_t}(s),\\ 
\quad r^u_t(s) &= \mu_{r_t}(s) + \beta_r(t) {\sigma_r}_t(s),\\ 
\quad r^l_t(s) &= \mu_{r_t}(s) - \beta_r(t) {\sigma_r}_t(s). \\
\end{split}
\end{align} 

Given a state-action pair at time step $t$, the lower bound on the expected reward of the next state is:
\begin{align}\label{reward}
lr_t(s, a) = \int_{\mathcal{S}} r^l_t(\varsigma) f(\varsigma|s, a) d\varsigma.
\end{align} 

\begin{definition} \textbf{(Returnability)}
	\label{returnability}
At any time step $t$, given an initial safe state set $S_0\supseteq\{s_0\}$, we define $S_j = S_{j-1} \cup \{s \in S \;|\; \exists a \in A, \: \int_{\varsigma\in S_{j-1}} f(\varsigma|s,a)d\varsigma \geq \tau \:\land \:lr_t(s, a) \geq h  \}$ and $\bar{S}_t = \lim_{j\to\infty} S_j$, for some $\tau\in[0,1]$. And the returnability of a state-action pair $(s,a)$ is defined as
\begin{align}\label{return}
return_t(s, a) = \int_{\bar{S}_t} f(\varsigma|s, a)d\varsigma.
\end{align}
\end{definition}
We emphasize that $\tau$ is a threshold on the returnability property.
Given the lower bound on the expected reward and the Definition~\ref{returnability}, we say a state-action pair $(s_t, a_t)$ can be safely realized if and only if $lr_t(s_t, a_t) \geq h$ and $return_t(s_t, a_t) \geq \tau$. We use this definition to restrict the actions our agent can take.

\subsection{Multi-agent Modeling}\label{agent_model}
To avoid jointly unsafe states, our algorithm predicts other agents' actions. To do so, we make the following assumption to account for both exploration and exploitation \cite{tijsma2016comparing}.

\begin{assumption} \label{explore_assumption} \textbf{(General Policy Assumptions)}
	Agent $i$ (any other agent in the environment) follows a policy that is a combination of commonly used exploration strategies, namely Optimism in the face of Uncertainty (OFU) and Boltzmann policy, which we describe in detail below. This is a mild assumption, because combining these strategies leads to a very general and inclusive class of policies.
\end{assumption}

We define a function $g^i$, which given the combined policy, computes the probability of taking an action that would transition from $s^i_t$ to $s^i_{t+1}$. We now describe the elements of the combined policy.

\noindent \textbf{Q-Function with GP.} 
Before we derive OFU and Boltzmann, we first introduce the $Q$-functions (action-value functions) for each agent. Given the GP reward model, we leverage $Q$-learning to find the mean of the $Q$-function:
\begin{align}\label{V_update}
Q^{\mu}_t(s, a) = \int_{\mathcal{S}} \left(r^{\mu}_t(\varsigma) + \gamma \max_{a'}Q^{\mu}_t(\varsigma, a')\right) f(\varsigma|s, a) d\varsigma
\end{align}
where $\gamma\in(0,1)$ is a discount factor. We similarly define $Q^u$ and $Q^l$ as the confidence bounds of the $Q$-function by using $r^u$ and $r^l$ instead of $r^{\mu}$, respectively. We denote $C_{Q_t}(s, a) = [Q_t^l(s, a), Q_t^u(s,a)]$. These definitions rely on an independence assumption for faster computation and good results both empirically and theoretically (see Sec.~\ref{sec:theory} and \ref{sec:experiments}). In fact, the posterior distribution above assuming independence is more conservative due to reward correlation between states. 

To learn a $Q$-function in a continuous domain, we adapt temporal difference error (TD-error) learning. For $Q^\mu$, $Q^u$ and $Q^l$, which are parameterized by $\theta^\mu$, $\theta^u$ and $\theta^l$, respectively,
\begin{align}
\theta \leftarrow \argmin_\theta \norm{r_t^{\mu}(s') + \max_{a'}Q_{\theta^{-}}(s', a') - Q_{\theta}(s_t, a_t)}
\end{align}
where $s'\sim f(\cdot|s_t,a_t)$, and $\theta^-$ is the set of parameters that are updated after every $\Delta$ time steps by copying $\theta$ (as in \cite{mnih2015human}). In this way, $Q$ values can be approximated.

\noindent\textbf{Optimism in the face of Uncertainty.}
OFU is a classic exploration strategy that favors the actions with high upper bound in potential return \cite{lai1985asymptotically,agrawal1995sample}. Given a Q-function distribution, OFU can be written as
\begin{align}
\pi_o(s, a) = \frac{\exp(Q^u(s, a) / T_o)}{\sum_{a'}\exp(Q^u(s, a')/ T_o)},
\end{align}
which outputs the probability of taking action $a$ at state $s$, where $T_o>0$ is the unknown temperature parameter. To derive an upper bound of the probability that a state-action pair is observed, we get the upper bound of $\pi^{i}_o$ as 
\begin{align}
\begin{split}
\pi^{iu}_o(s, a) &= \frac{\exp(Q^{iu}(s, a) / T^i_o)}{\exp(Q^{iu}(s, a) / T^i_o) + \sum_{a' \neq a}\exp(Q^{i\mu}(s, a')/ T^i_o)}\\ 
&\geq \pi^i_o(s, a)\\
\end{split}
\raisetag{1\normalbaselineskip}
\end{align}

\noindent\textbf{Boltzmann Policy.}
Similar to OFU, the Boltzmann exploration strategy \cite{barto1991real} is:
\begin{align}\label{boltzmann}
    \pi_b(s, a) = \frac{\exp(Q^{\mu}(s, a) / T_b)}{\sum_{a'}\exp(Q^{\mu}(s, a')/ T_b)}
\end{align}
where $T_b > 0$ is the unknown temperature parameter. The upper confidence bound is:
\begin{align}
\begin{split}
\pi^{iu}_b(s, a) &= \frac{\exp(Q^{i\mu}(s, a) / T^i_b)}{\exp(Q^{i\mu}(s, a) / T^i_b) + \sum_{a' \neq a}\exp(Q^{il}(s, a')/ T^i_b)}\\
& \geq \pi^i_b(s, a).\\
\end{split}
\raisetag{1\normalbaselineskip}
\end{align}

Exploitation is implicitly covered within the Boltzmann policy formulation with a right choice of $T_b$. 

\noindent\textbf{Combining OFU and Boltzmann.} The agent $i$ will follow OFU with probability $\epsilon^i$, and Boltzmann with probability $1-\epsilon^i$ for some unknown $0\leq\epsilon^i\leq 1$. When $T^i_o \rightarrow \infty$ and $T^i_b \rightarrow 0$, the policy reduces to a pure $\epsilon$-greedy policy. This combination completes the definition of $g^i$, the transition probability between states given estimated policies and $\epsilon^i$ (see Assumption~\ref{explore_assumption}), and allows it to model all of OFU, Boltzmann and $\epsilon$-greedy strategies. 

\subsection{Inference of Joint Policy Parameters}
 \label{parameter_sec}
Having described the strategies, we now explain how to jointly estimate $T^i_o, T^i_b$ and $\epsilon^i$. By assuming uniform prior over $P(\epsilon^i, T^i_b, T^i_o|Q^i)$, after a series of observations $\xi^i$, Bayes' rule gives
\begin{align}
    P(\epsilon^i, T^i_b, T^i_o | \xi^i, Q^i) &\propto P(\xi^i | \epsilon^i, T^i_b, T^i_o, Q^i).
\end{align}
We find the maximum likelihood estimate of $(\epsilon^{i*}, T^{i*}_b, T^{i*}_o)$:
\begin{align}
    \epsilon^{i*}, T^{i*}_b, T^{i*}_o = \argmax_{\epsilon^i, T^i_b, T^i_o} \log(p(\xi^i | \epsilon^i, T^i_b, T^i_o, Q^i)).
\end{align}
Because the form of $g^i$ is known, we use $g^i(\mathbf{\epsilon}^{i*}, \beta_r^i, T^{i*}_b, T^{i*}_o, \pi^{iu}_o, \pi^{iu}_b, s^i_t, s^i_{t+1})$ as an upper confidence bound estimate of $g^i(\mathbf{\epsilon}^i, \beta_r^i, T^i_b, T^{i}_o, \pi^i_o, \pi^i_b, s^i_t, s^i_{t+1})$.

We now have a full loop of inference and belief update. In the inference stage, each agent running our algorithm would keep a GP for $r$ and a corresponding estimated $Q$-function distribution. The agent would also keep track of policy parameters for each of the other agents. Given the expression above, the lower confidence bound of not entering any jointly unsafe states for our agent is
\begin{equation}
\begin{split}
    &lc(s^1_t, a^1_t) \! = \! \int\displaylimits_{\mathcal{S}} \! \left(1 \! - \! \int\displaylimits_{u \in U: u^1 = \varsigma} \! \prod_{i=2}^N g^i(\dots)du\right) \! f(\varsigma | s^1_t,\! a^1_t) d\varsigma,\\
\end{split}
\end{equation}
where $g^i(\dots)$ is short for $g^i(\mathbf{\epsilon}^i, \beta_r^i, T^i_o, T^i_b, \pi^{iu}_o, \pi^{iu}_b, s^i_t, u^i)$.

\subsection{Overall Algorithm}
Algorithm~\ref{safemdp_alg} introduces the overall method. We compute the expected lower bound reward according to Eq.~\eqref{return}, and then select the set of individually safe actions $A_{\text{hi-rew}}$ (lines 6-7).
We then compute the set of actions, $A_{\text{joint-safe}}$, with low risk of joint unsafety as defined in Definition~\ref{safe_def} of Sec.~\ref{agent_model} (lines 8-9), and
also compute the set of actions, $A_{\text{safe}}$ that satisfies both constraints (line 10).
Finally the algorithm selects the action with the lowest probability of joint unsafety if the available action set is empty, or selects the action that optimizes the agent's objective $Obj$ within the available action set (line 11-13).
We update the Q-function and the parameters following Eq.~\eqref{V_update} and Sec.~\ref{parameter_sec} (line 15-17).

\begin{figure}[h]
	\vspace{-10px}
	\begin{minipage}{\columnwidth}
		\begin{algorithm}[H]
			\caption{Multi-agent Safe $Q$-Learning}\label{safemdp_alg}
			\begin{algorithmic}[1]
				\STATE {\bfseries Input:} $\mathcal{S}, \mathcal{A}, f, S_0, c, h, \tau, \beta_r, Obj$
				\STATE Initialize $Q$.
				\STATE Initialize $\mathcal{GP}_r$ for reward estimate.
				\STATE Initialize $\bm{\epsilon}$, $\mathbf{T_b}$ and $\mathbf{T_o}$.
				\FOR{$t = 1, 2, ...$}
				\STATE Compute $lr(s^i_t, a)$ and $return(s^i_t, a)$ for $\forall a \in \mathcal{A}$
				\STATE $A_{\text{hi-rew}}\!\gets\!\{a | lr(s^i_t, a)\!\geq\!h \land return(s^i_t, a)\!>\!\tau\}$
				\STATE Compute $lc(s^i_t, a), \forall a \in \mathcal{A}$
				\STATE $A_{\text{joint-safe}} \leftarrow \{a | lc(s^i_t, a) \geq c \}$
				\STATE $A_{\text{safe}} \leftarrow A_{\text{hi-rew}} \bigcap A_{\text{joint-safe}}$
				\IF {$A_{\text{safe}} = \emptyset$}
				\STATE $A_{\text{safe}} \gets \arg\max_{a} lc(s^i_t, a)$
				\ENDIF
				\STATE $a^i_{t} \leftarrow \arg\max_{a\in A_{\text{safe}}}Obj(s^i_t, a)$
				\STATE $s^i_{t+1} \sim f(s^i_{t}, a^i_{t})$
				\STATE Update $\mathcal{GP}_{r}$ using $r(s^i_{t+1})$.
				\STATE Update $Q$ with $\mathcal{GP}_{r}$ and $f$
				\STATE Update $\bm{\epsilon}$, $\mathbf{T_b}$ and $\mathbf{T_o}$
				\ENDFOR
			\end{algorithmic}
		\end{algorithm}
	\end{minipage}
	\vspace{-10px}
\end{figure}

\section{Theoretical Results} \label{sec:theory}
In this section, we discuss the theoretical results of our algorithm. Mainly, we discuss the accuracy of our GP estimate on the reward along with $\beta_r(t)$, on the value function and we discuss the computational complexity of our algorithm.

\noindent\textbf{Reward Estimation Accuracy with Gaussian Processes.} The confidence interval of the GP for $r$ depends on $\beta^i_r(t)$, whose tuning has been well studied in \cite{turchetta2016safe} for the single-agent \textsc{SafeMDP} algorithm. Their result can be applied to our setting by choosing
\begin{align}\label{beta_choice}
    \beta_r^i(t) = 2B^i + 300\alpha^i_t\log^3(t/\delta^i),
\end{align}
where $B^i$ is the bound on the RKHS norm of the function $r(\cdot)$, $\delta^i$ is the probability of agent $i$ visiting individually unsafe states, and $\alpha^i_t$ is the maximum mutual information that can be gained about $r(\cdot)$ from $t$ noisy observations. The information capacity $\alpha^i_t$ has a sublinear dependency on $t$ for many commonly used kernels \cite{srinivas2010gaussian}. Assuming $||r||^2_k\leq B^i$ and the noise $w_t$ is zero-mean conditioned on the history as well as uniformly bounded by $\eta$ for all $t > 0$, if we choose $\beta^i_t$ above, then for all $s\in \mathcal{S}$ and $t>0$, with probability at least $1 - \delta^i$ that $r(s^i) \in {C_r}_t(s^i)$, where ${C_r}_t(s^i)$ is the estimated confidence bounds of the reward function on state $s^i$ at time step $t$ \cite{turchetta2016safe}.

\noindent\textbf{Value Function Estimation Accuracy.} To be able to accurately estimate the Boltzmann policy, the algorithm must make accurate estimates of the value function. Based on the reward functions' estimation accuracy, we can derive the following two theorems for the accuracy of value functions.
\vspace{-2px}
\begin{theorem}\label{value_func_prob}
\textit{If 1) $\beta^i_r(t)$ follows Eq.~\eqref{beta_choice}, 2) the states visited by agent $i$ are also visited by the estimating agent, and 3) $Q$ is in the form of universal state representation, then $Q(s, a) \in C_{Q_t}(s, a)$ with at least probability $1 - \delta^i$.}
\end{theorem}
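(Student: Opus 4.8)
The plan is to exploit the monotonicity of the Bellman backup that defines the three $Q$-functions, after reducing the entire probabilistic claim to a single high-probability event. By the reward estimation result quoted above (from \cite{turchetta2016safe}), with probability at least $1-\delta^i$ we have $r^l_t(s) \leq r(s) \leq r^u_t(s)$ \emph{simultaneously} for all $s \in \mathcal{S}$. Everything that follows is a deterministic argument carried out on this one event, so no extra probability is shed when the reward bounds are propagated through the (infinitely many) integrals in the $Q$-recursion. This is precisely why the final guarantee keeps the factor $1-\delta^i$ rather than degrading with the recursion depth or with a union over states.

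Conditioned on that event, I would introduce the Bellman operator $(\mathcal{T}_\rho Q)(s,a) = \int_{\mathcal{S}}\bigl(\rho(\varsigma) + \gamma \max_{a'} Q(\varsigma,a')\bigr) f(\varsigma|s,a)\,d\varsigma$ for an arbitrary reward $\rho$, and observe that $Q^l_t$, the true action-value $Q$ built from $r$, and $Q^u_t$ are exactly the fixed points of $\mathcal{T}_{r^l_t}$, $\mathcal{T}_r$, and $\mathcal{T}_{r^u_t}$ respectively, per Eq.~\eqref{V_update}. Since $f(\cdot|s,a)$ integrates to one and $\gamma \in (0,1)$, the operator is a $\gamma$-contraction in the sup-norm, so each fixed point is unique and is reached by value iteration from any common initialization $Q_0$. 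The two order properties I would invoke are: (i) $\mathcal{T}_\rho$ is order-preserving in its argument, because $\max_{a'}$ and integration against the nonnegative kernel $f$ both preserve pointwise inequalities; and (ii) $\rho_1 \leq \rho_2$ pointwise implies $\mathcal{T}_{\rho_1} Q \leq \mathcal{T}_{\rho_2} Q$ for any fixed $Q$.

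Combining (i) and (ii), a short induction on the value-iteration index $n$ gives $(\mathcal{T}_{r^l_t}^n Q_0)(s,a) \leq (\mathcal{T}_r^n Q_0)(s,a) \leq (\mathcal{T}_{r^u_t}^n Q_0)(s,a)$ for all $n,s,a$: the inductive step first applies (ii) to swap in the dominated or dominating reward, then applies (i) to carry the previous iterate's inequality through one more backup. Letting $n \to \infty$ and using contraction convergence, the non-strict inequalities pass to the limit, yielding $Q^l_t(s,a) \leq Q(s,a) \leq Q^u_t(s,a)$, i.e. $Q(s,a) \in C_{Q_t}(s,a)$, on the event of probability at least $1-\delta^i$.

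The hard part is not this monotone propagation but justifying that the \emph{learned} objects coincide with these idealized fixed points, which is exactly where the second and third conditions of the theorem enter. The third condition (a universal state representation for $Q$) is what lets the TD-error minimization over $\theta$ represent the exact Bellman fixed point, eliminating a function-approximation bias that would otherwise break the inclusion. The second condition (every state visited by agent $i$ is also visited by the estimating agent) is what makes the reward bounds $[r^l_t, r^u_t]$ genuinely valid at the states entering the integrals defining $Q$; without this coverage the GP bounds need not contain $r$ at unvisited states, invalidating the base hypothesis of the induction. I would therefore phrase the guarantee over the visited region of $\mathcal{S}$ and note that the continuous-domain recursion additionally needs $r^l_t$ and $r^u_t$ to be integrable against $f$, which follows from the Lipschitz and RKHS regularity in Assumption~\ref{GP_assumption}.
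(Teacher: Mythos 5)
Your proof is correct and reaches the paper's conclusion, but by a genuinely different and more rigorous route. The paper's own proof works on the same high-probability event (the uniform reward confidence bounds from \cite{turchetta2016safe}), but then expands the converged $Q$-function explicitly as a discounted telescoping sum $Q_t(s^i,a^i) = \mathbb{E}[r_t(s')] + \gamma\,\mathbb{E}[r_t(s'')] + \cdots$ along ``the optimal trajectory'' of taking $a^i$ at $s^i$, and bounds each term by the corresponding expansion of $Q^l_t$ and $Q^u_t$. That argument implicitly assumes the same sequence of maximizing actions (hence the same future state distribution) is used in the expansions of $Q^l_t$, $Q_t$, and $Q^u_t$, which need not hold since the $\argmax$ in the Bellman backup can differ across the three reward functions; the paper glosses over this. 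Your monotone-operator argument --- showing $\mathcal{T}_{r^l_t}$, $\mathcal{T}_r$, $\mathcal{T}_{r^u_t}$ are ordered $\gamma$-contractions whose fixed points inherit the ordering by induction on value-iteration steps --- sidesteps that issue entirely and is the standard, airtight way to establish such optimism/pessimism sandwiches. What the paper's expansion buys in exchange is reuse: the identical telescoping sum is the engine of the subsequent Theorem~\ref{value_func_conf} bounding $Q^u_t - Q^l_t$ by $2\beta_r(t)\max_s \sigma_{r_t}(s)/(1-\gamma)$, so the authors get both results from one computation. Your reading of conditions 2) and 3) --- coverage making the GP bounds valid where they are integrated, and universal representation eliminating approximation bias so the learned $Q$ equals the Bellman fixed point --- matches the role these hypotheses play in the paper.
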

\begin{proof}
	\cite{turchetta2016safe} proved that with the choice of $\beta^i_r(t)$ above, there is at least probability $1 - \delta^i$, $r(s^i) \in {C_{r_t}}(s^i), \forall s^i \in \mathcal{S}$. The Q-learning algorithm we define is by sampling potential transitions (to $s'$) and perform
	\begin{equation}
	Q_t(s^i, a^i) = r_t(s') + \gamma \max_{a'}Q_t(s', a')
	\end{equation}
	Hence at convergence with a universal representation of Q-function, the Q-function can be written as
	\begin{equation}
	Q_t(s^i, a^i) =   \mathbb{E}[r_t(s')] + \gamma \mathbb{E}[r_t(s'')] + \gamma^2 \mathbb{E}[r_t(s''')] + \dots
	\label{Q_expansion}
	\end{equation}
	where we use $s'$, $s''$, $s'''$, $\dots$ denote the future states in the optimal trajectory of taking $a^i$ at $s^i$. Equation~\eqref{Q_expansion} holds for each of $Q_t^\mu, Q_t^l$, and $Q_t^u$ respectively with $r_t^\mu, r_t^l$, and $r_t^u$. Since $\forall s \in S$, $r_t(s) \in C_{r_t}(s)$ with probability at least $1-\delta^i$; we have $\mathbb{E}[r_t(s')] \in [\mathbb{E}[r_t^l(s')], \mathbb{E}[r_t^u(s')]]$, with probability at least $1-\delta^i$ for any state distribution. Therefore with probability at least $1-\delta^i$,
	\begin{equation}\label{value_func_var}
	\begin{split}
	Q_t(s^i, a^i) \in &[\mathbb{E}[r_t^l(s')] + \gamma\mathbb{E}[r_t^l(s'')] + \dots,\\ 
	& \mathbb{E}[r_t^u(s')] + \gamma\mathbb{E}[r_t^u(s'')]  + \dots]\\
	= &[Q^l_t(s^i, a^i), Q^u_t(s^i, a^i)].\qquad \qedhere
	\end{split}
	\end{equation}
\end{proof} 
\vspace{-6px}

Even without the prior knowledge on $\delta^i$, ${C_Q}_t(s^i, a^i)$ covers at least $\beta^i_r(t)$ standard deviations from the mean value, where the probability of $Q(s^i, a^i)$ being bounded by ${C_Q}_t(s^i)$ can be directly found using a standard Z-score table.

However, from Eq.~\eqref{V_update}, we observe the variance of the value function is monotonically increasing during the update. Therefore, we need to bound the confidence intervals of the value function. The next theorem establishes such a bound.

\begin{theorem}\label{value_func_conf}
$Q^u_t(s, a) - Q^l_t(s, a) \leq \frac{2\beta_r(t)\max_{s \in S} \sigma_{r_t}(s)}{1-\gamma}$ for discounted infinite-horizon MDP, $\forall s \in \mathcal{S}, a\in\mathcal{A}$ and any given $\beta_r(t)$. 
\end{theorem}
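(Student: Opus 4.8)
The plan is to exploit the recursive Bellman structure that defines the confidence bounds $Q^u_t$ and $Q^l_t$ in Eq.~\eqref{V_update}, rather than the termwise unrolling used in the proof of Theorem~\ref{value_func_prob}. The crucial observation is that the width of the reward confidence interval is known exactly: from the definitions of $r^u_t$ and $r^l_t$ we have $r^u_t(s) - r^l_t(s) = 2\beta_r(t)\sigma_{r_t}(s)$ for every $s$, so the only thing left to control is how this per-state gap propagates through the discounted Bellman recursion.

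First I would define the pointwise width $D_t(s,a) = Q^u_t(s,a) - Q^l_t(s,a) \geq 0$ and subtract the Bellman equation for $Q^l_t$ from that for $Q^u_t$. Since both share the same transition kernel $f(\cdot|s,a)$, the reward terms collapse to $\mathbb{E}_{s'}[2\beta_r(t)\sigma_{r_t}(s')]$ and the continuation terms become $\gamma\,\mathbb{E}_{s'}[\max_{a'}Q^u_t(s',a') - \max_{a'}Q^l_t(s',a')]$. The key inequality is $\max_{a'}Q^u_t(s',a') - \max_{a'}Q^l_t(s',a') \leq \max_{a'}\bigl(Q^u_t(s',a') - Q^l_t(s',a')\bigr) = \max_{a'}D_t(s',a')$, which lets me write $D_t(s,a) \leq \mathbb{E}_{s'}\bigl[2\beta_r(t)\sigma_{r_t}(s') + \gamma\max_{a'}D_t(s',a')\bigr]$.

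Next I would bound the right-hand side uniformly: replace $\sigma_{r_t}(s')$ by $\max_{s\in\mathcal{S}}\sigma_{r_t}(s)$ and $\max_{a'}D_t(s',a')$ by $\bar D := \sup_{s,a}D_t(s,a)$. Taking the supremum over $(s,a)$ on the left then yields the scalar fixed-point inequality $\bar D \leq 2\beta_r(t)\max_{s\in\mathcal{S}}\sigma_{r_t}(s) + \gamma\bar D$; since $\gamma\in(0,1)$, rearranging gives $\bar D \leq \frac{2\beta_r(t)\max_{s\in\mathcal{S}}\sigma_{r_t}(s)}{1-\gamma}$, which is exactly the claimed bound for every $(s,a)$.

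The main obstacle is the subtlety that the greedy action---and hence the entire ``optimal trajectory'' invoked in Theorem~\ref{value_func_prob}---need not coincide for $Q^u_t$ and $Q^l_t$, so one cannot simply subtract two termwise-identical geometric series. The inequality $\max f - \max g \leq \max(f-g)$ is precisely what sidesteps this mismatch and makes the argument rigorous; I would also need to note that $\gamma\in(0,1)$ together with the boundedness of $\sigma_{r_t}$ over $\mathcal{S}$ guarantees the geometric sum (equivalently, the fixed point) is finite, so that $\bar D<\infty$ and the rearrangement is valid.
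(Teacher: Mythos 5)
Your proof is correct, and it takes a genuinely different route from the paper's. The paper unrolls $Q^u_t$ and $Q^l_t$ as discounted series of expected rewards along ``the optimal trajectory'' (its Eq.~\eqref{Q_expansion}), subtracts the two series termwise, bounds each gap $\mathbb{E}[r^u-r^l]$ by $2\beta_r(t)\max_s\sigma_{r_t}(s)$, and sums the geometric series. You instead work directly with the Bellman fixed-point form of Eq.~\eqref{V_update}: subtracting the two recursions, invoking $\max_{a'}Q^u_t - \max_{a'}Q^l_t \leq \max_{a'}(Q^u_t - Q^l_t)$, and closing a scalar inequality $\bar D \leq 2\beta_r(t)\max_s\sigma_{r_t}(s) + \gamma\bar D$. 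The obstacle you flag is real and is exactly the weak point of the paper's argument: the greedy actions under $Q^u_t$ and $Q^l_t$ need not agree, so the two expansions in Eq.~\eqref{Q_expansion} are in general taken along \emph{different} trajectories, and the termwise subtraction in the paper's Eq.~\eqref{value_func_var2} is not literally justified as written (it can be repaired by the same $\max$-difference inequality you use, applied at each unrolling step). Your fixed-point argument buys rigor on precisely this point and is shorter; the paper's unrolling buys an explicit trajectory-level picture that parallels its proof of Theorem~\ref{value_func_prob}. Your closing remark that $\bar D<\infty$ is needed before rearranging is also the right care to take, and it holds here since $\sigma_{r_t}$ is bounded on $\mathcal{S}$ and $\gamma\in(0,1)$.
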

\begin{proof}
	Using Eq.~\eqref{Q_expansion},	
	\begin{equation}\label{value_func_var2}
	\begin{split}
	&Q^u_t(s, a) - Q^l_t(s, a) \\
	& =( \mathbb{E}[r^u(s')] + \gamma \mathbb{E}[r^u(s'')] + \gamma^2 \mathbb{E}[r^u(s''')] + \dots ) \\
	& \text{\space \space}- ( \mathbb{E}[r^l(s')] + \gamma \mathbb{E}[r^l(s'')] + \gamma^2 \mathbb{E}[r^l(s''')] + \dots)\\
	& = ( \mathbb{E}[r^u(s') - r^l(s')] + \gamma \mathbb{E}[r^u(s'') - r^l(s'')] + \dots \\
	& \leq 2\beta_r(t) \mathbb{E}[\sigma_{r_t}(s')]) + 2\gamma\beta_r(t) \mathbb{E}[\sigma_{r_t}(s'')] + \dots \\
	& = 2\beta_r(t) (\mathbb{E}[\sigma_{r_t}(s')] + \gamma \mathbb{E}[\sigma_{r_t}(s'')] + \dots\\
	& \leq \frac{2\beta_r(t)\max_s \sigma_{r_t}(s)}{1-\gamma}.\qquad\qquad\qquad\qquad\qquad \qedhere
	\end{split}
	\end{equation}
\end{proof} 
\vspace{-4px}

\noindent\textbf{Computational Complexity.} For the algorithm to have online execution ability, it must be scalable and fast. With the prior knowledge that GP updates are $O(|\mathcal{S}_{\textrm{v}}|^3)$ where $\mathcal{S}_{v}$ is the set of states that are visited by any agent \cite{rasmussen2004gaussian}, we conclude our algorithm is linear in $|\mathcal{A}|$ and polynomial in $|\mathcal{S}_{v}|$.

\begin{theorem}
\textit{The Multi-agent Safe $Q$-Learning algorithm has a time complexity of $O(|\mathcal{S}_{v}|^3 + |\mathcal{A}| + Nt)$ at time step $t$ in a MDP-MA with $N$ agents using classic GPs. Using GP kernel approximation methods described in \cite{dasgupta2018finite, banerjee2012efficient}, the time complexity reduces to $O(|\mathcal{S}_{v}| + |\mathcal{A}| + Nt)$. If we apply sampling methods described in \cite{jain2018learning}, we can further reduce the time complexity to $O(|\mathcal{A}| + Nt)$.}
\end{theorem}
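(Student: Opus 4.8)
The plan is to bound the cost of a single execution of the outer loop body in Algorithm~\ref{safemdp_alg}, and then sum the contributions, since the body is run exactly once per time step. Three distinct cost sources appear, and I would show that they correspond precisely to the three terms $|\mathcal{S}_v|^3$, $|\mathcal{A}|$, and $Nt$.

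First, the posterior update of $\mathcal{GP}_r$ (line 15) requires re-solving a linear system whose Gram matrix is indexed by the visited states $\mathcal{S}_v$, which costs $O(|\mathcal{S}_v|^3)$ with classic GPs; this is the cost we take as given from the text. Second, lines 6--14 iterate over the action set: for each $a\in\mathcal{A}$ we evaluate $lr_t(s^i_t,a)$ and $return_t(s^i_t,a)$ (Eqs.~\eqref{reward}--\eqref{return}), form $A_{\text{hi-rew}}$, evaluate $lc(s^i_t,a)$, form $A_{\text{joint-safe}}$, and take the two $\arg\max$ operations, each a constant-cost step per action once the per-agent policy estimates are available, so this block contributes $O(|\mathcal{A}|)$. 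Third, the parameter-inference step (line 17) re-estimates $(\epsilon^i,T^i_b,T^i_o)$ for each of the $N-1$ other agents by maximizing the log-likelihood over the observation history $\xi^i$, whose length grows linearly in $t$; doing this for all agents costs $O(Nt)$. Summing yields $O(|\mathcal{S}_v|^3 + |\mathcal{A}| + Nt)$.

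The main obstacle is justifying that the continuous integrals in Eqs.~\eqref{reward}, \eqref{return}, \eqref{V_update}, and in the expression for $lc$ can be computed at a cost that does not inflate these bounds. I would argue that each such integral collapses to a finite sum over the reachable/visited support (or a fixed-size Monte-Carlo sample), so its evaluation is $O(1)$ in the per-action accounting and is subsumed by the GP term; likewise the $Q$-update (line 16) is carried out by the TD-error rule on a universal function approximator whose per-update cost is independent of $|\mathcal{S}_v|$, $|\mathcal{A}|$, and $t$. Care is also needed for the product $\prod_{i=2}^N g^i$ inside $lc$: the per-agent factors are computed once from the estimated parameters and reused across all actions, so the $N$-dependence is confined to the $Nt$ inference term rather than multiplying $|\mathcal{A}|$.

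Finally, the two refinements follow by substituting improved GP update costs while leaving the $|\mathcal{A}|$ and $Nt$ terms untouched. Replacing the exact posterior with the finite-feature and kernel-approximation schemes of \cite{dasgupta2018finite, banerjee2012efficient} makes the update linear in the number of data points, reducing $O(|\mathcal{S}_v|^3)$ to $O(|\mathcal{S}_v|)$ and hence the total to $O(|\mathcal{S}_v| + |\mathcal{A}| + Nt)$; applying the sampling method of \cite{jain2018learning} bounds the number of retained points by a constant independent of $|\mathcal{S}_v|$, eliminating the state-count term entirely and yielding $O(|\mathcal{A}| + Nt)$.
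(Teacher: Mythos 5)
Your decomposition matches the paper's proof exactly: the same four cost sources (GP posterior update at $O(|\mathcal{S}_v|^3)$, the per-action loop at $O(|\mathcal{A}|)$, the constant-cost sampled $Q$-update, and the $O(Nt)$ likelihood maximization over the observation history), summed and then refined by substituting the cited GP approximation and sampling results. Your added care about why the continuous integrals and the product over agents do not inflate these terms goes slightly beyond what the paper writes down, but the argument is essentially identical.
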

\begin{proof}
	With the prior knowledge that GP updates are $O(|\mathcal{S}_v|^3)$ \cite{rasmussen2004gaussian}, we can derive the time complexity bounds as follows.
	
	There are four major components of this algorithm.
	GP update for the reward function is on the state space and for all agents, hence complexity $O(|\mathcal{S}_v|^3)$. Choosing the optimal action among the safe actions has a time complexity of $O(|\mathcal{A}|)$. $Q$-function update with sampling takes $O(1)$. Optimization for $\epsilon$ and $T^i_b$ is linear to the number of steps in trajectory, hence complexity $O(Nt)$.
	
	Incorporating all complexity bounds above, the time complexity bound of the algorithm is $O(|\mathcal{S}_v|^3 + |\mathcal{A}| + Nt)$. 
	
	With optimizations on GPs in \cite{dasgupta2018finite, banerjee2012efficient} to make linear complexity and sampling in \cite{jain2018learning} to make constant complexity, the algorithm complexity becomes $O(|\mathcal{S}_v| + |\mathcal{A}| + Nt)$ and $O(|\mathcal{A}| + Nt)$ respectively.
	 \qedhere
\end{proof} 
\vspace{-10px}

\section{Experiments} \label{sec:experiments}
To assess the performance of our method, \textbf{Multi Safe Q-Agent}, we performed several simulation experiments using the following baselines:
\begin{itemize}[nosep]
	\item \textbf{Single Safe MDP Agent:} The algorithm in \cite{turchetta2016safe} with the modifications discussed in Section~\ref{iterative_safemdp}
	\item \textbf{Na\"ive Q-Agent:} $Q$-learning agent that assumes all other agents apply uniformly random policies.
	\item \textbf{Bayesian Q-Agent:} $Q$-learning agent receives additional negative reward for entering unsafe states and keeps a Bayesian belief on others' policies on discretized domain.
	\item \textbf{Safe Q-Agent:} $Q$-learning agent that takes all agents' states as its state space and receives additional negative reward for both individual unsafety and joint unsafety.
\end{itemize}

The GPs employed by the agents use an RBF Kernel with a length scale of $10$ and a prior standard deviation of $10$ and a White Noise Kernel with a prior standard deviation of $10$. The Q-functions are estimated using neural networks with two hidden layers of size $50$.


\subsection{Mars Rover Experiment} \label{mars_versus}
\begin{figure*}[t]
	\centering
	\includegraphics[width=0.75\textwidth]{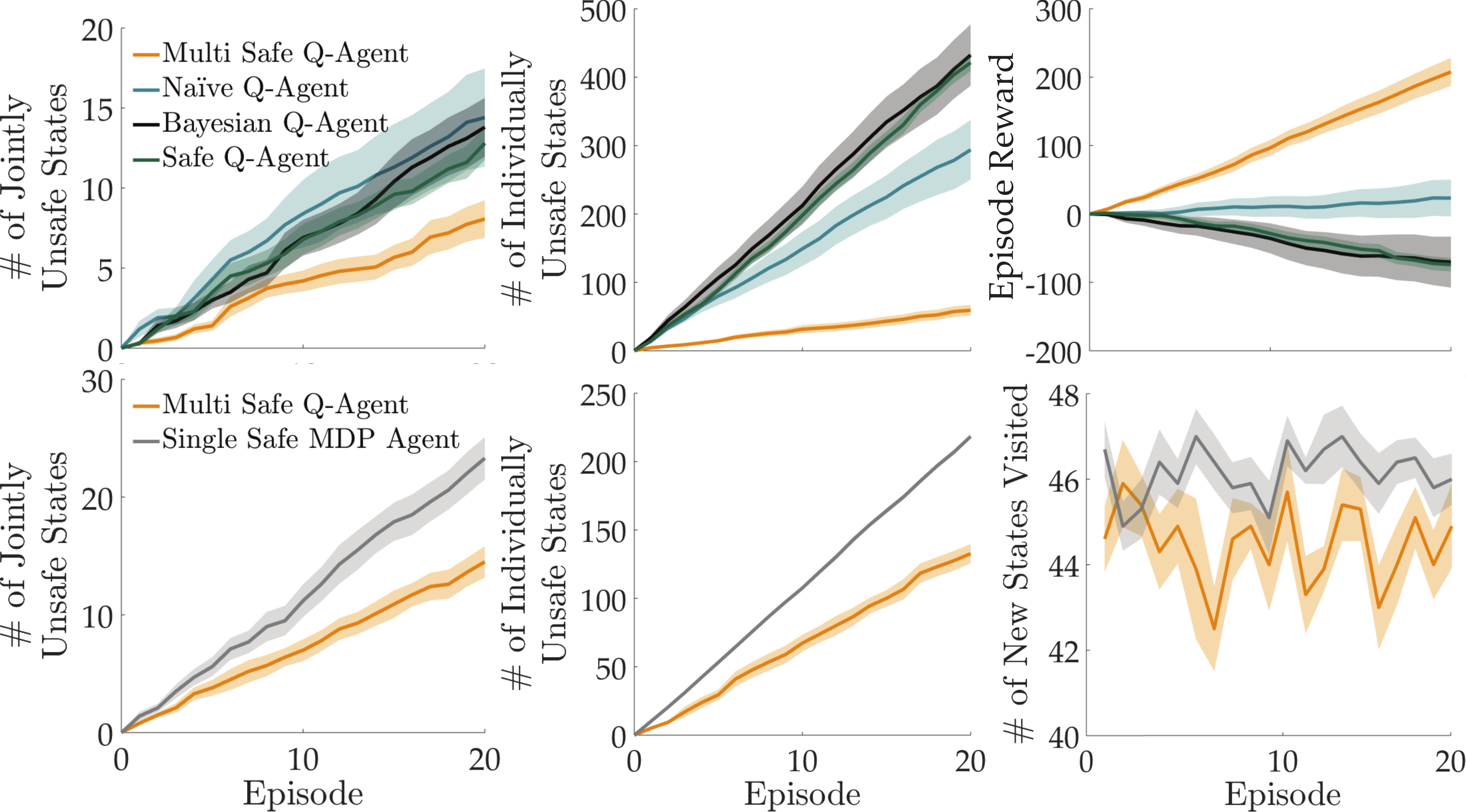}
	\caption{Experiment Results for Exploitation (top) and Exploration (bottom) with Mars Rovers (mean$\pm$s.e.)}
	\vspace{-7px}
	\label{fig.rover_combined}
\end{figure*}

\begin{figure*}[h]
	\centering
	\includegraphics[width=0.8\textwidth]{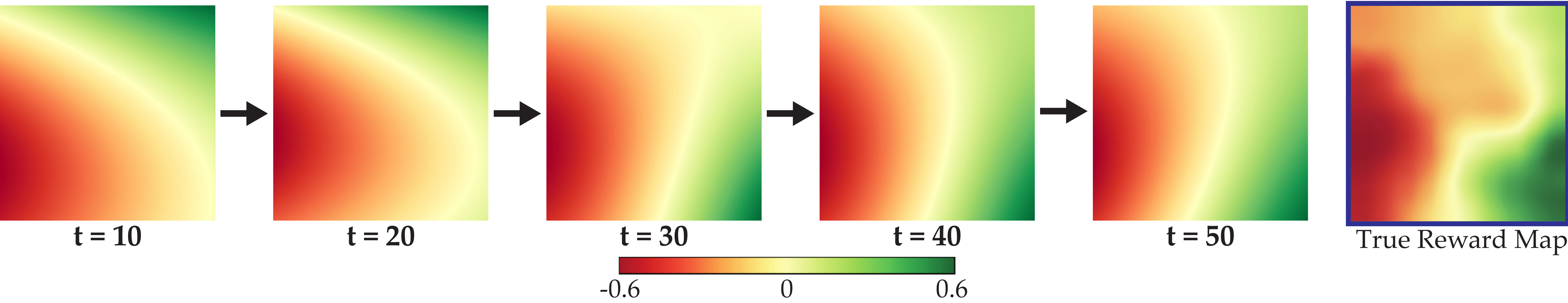}
	\caption{Learned reward values by the Multi Safe Q-Agent when exploring, and the true reward map are shown.}
	\vspace{-7px}
	\label{fig.state_learning}
\end{figure*}

We downloaded the Mars surface map from \href{http://www.uahirise.org/PDS/DTM/PSP/ORB_010200_010299}{High Resolution Imaging Science Experiment (HiRISE)} \cite{mcewen2007mars}, and selected a square region with an area of $100\textrm{ m}^2$, starting at $30.6^\circ$ latitude and $202.2^\circ$ longitude. The agents have $4$ possible actions: up, down, left and right. Each action moves the agent $1\textrm{ m}$ with some error that follows independent Gaussian distributions in both axes with means $0$ and variances $0.1$. When two rovers are too close to each other, they collide. These define joint safety. If an agent takes an action towards outside the boundary, it respawns at the other side of the map so that an agent is always forced to move instead of trying to stay at its original state to avoid collision. The individual safety condition is the altitude: the rover may be not recoverable when its altitude is too low. 

We simulate two objectives: Exploitation and exploration. In each, there exist $19$ $\epsilon$-greedy Q-learning agents with randomly chosen $\epsilon$ values. They do not try to avoid unsafe states. The experiment is set up in this way so that probability of collisions in the environment is high without careful navigation. We test how each agent performs in such hostile environment with its own objective. Agents adopt $h\!=\!-0.5$, $\tau\!=\!1$, $c\!=\!0.7$ and collision distance threshold of $0.1$ meters.

We run each experiment $10$ times with random initial states (constrained to individually and jointly safe states). Each experiment is run with 20 episodes of 50 time steps. We use Na\"ive Q-Agent, Bayesian Q-Agent and Safe Q-Agent as baselines for the exploitation setting because they optimize episode reward without a specific goal of optimizing for exploration. We use Single Safe MDP Agent as baseline for exploration because it optimizes only for state exploration.

\noindent\textbf{Exploitation.} In this experiment, we use altitude as the agents' reward. The results are shown in Fig.~\ref{fig.rover_combined}(top). Multi Safe Q-Agent significantly outperforms other candidates in both safety and episode reward (we exclude the additional negative reward for Bayesian Q-Agent and Safe Q-Agent for fairness).

Bayesian Q-Agent and Safe Q-Agent are safer than the Na\"ive Q-Agent, but since their rewards are corrupted by the additional penalty, they did not properly learn how to maximize reward. Na\"ive Q-Agent's episode reward is also low due to the limit in the actions it can choose. Since it assumes random policy for other agents, it unnecessarily eliminates many trajectories that would possibly lead to higher episode reward. On the other hand, Multi Safe Q-Agent outperforms the baselines by getting higher and higher rewards after learning from previous episodes.

\noindent\textbf{Exploration.} In this experiment, Multi Safe Q-Agent chooses the actions to visit the most uncertain state (the state with highest variance by the GP) that satisfies the safety constraints in the algorithm. We compare its performance against Single Safe MDP Agent, which is already designed for exploration. We show our agent's estimate of the altitude map in Fig.~\ref{fig.state_learning}. Quantitative results are in Fig.~\ref{fig.rover_combined}(bottom). Multi Safe Q-Agent significantly outperforms in safety, but is marginally worse in terms of the number of new states visited. This is reasonable, because achieving higher safety usually means a more constrained set of actions, which then harms the exploration. Hence, we conclude it is much safer and performs exploration comparably well.

\subsection{Quadcopter Collaborative Experiment}
In this experiment, we aim to evaluate agents' capability of safe collaboration\footnote{A video of the experiment is at \url{http://youtu.be/l76glwgF67k}.}. Two quadcopters are initialized in the domain to ship an item to the destination together. The item would be lost if the quadcopters are too far apart from each other. The setup is shown in Fig.~\ref{fig.setup}.

\begin{figure}[h]
	\centering
	\includegraphics[width=0.6\columnwidth]{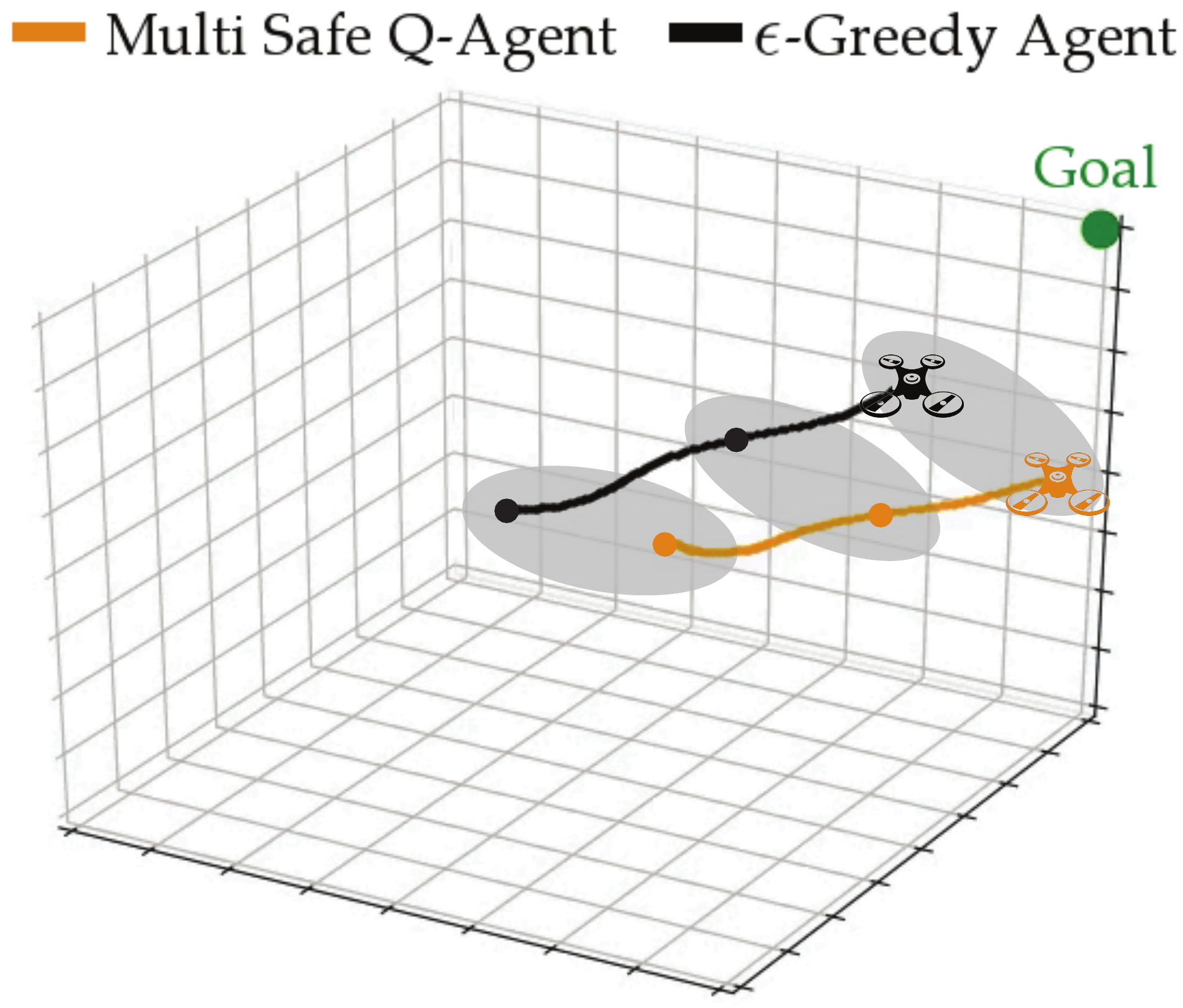}
	\caption{Quadcopter Experiment Setup}
	\vspace{-8px}
	\label{fig.setup}
\end{figure}
We discretize the action space of the quadcopters such that each has $6$ possible actions: up, down, forward, backward, left and right, each of which would move the agent $0.1$ unit. When the agents move, their actions have independent Gaussian errors in all three axes with means $0$ and variances $0.1$. We discretize the time such that when an action is chosen, a PID controller navigates the quadcopter to the destination before the next time step. Quadcopters start at $(0.5, 0, 0)$ and $(-0.5, 0, 0)$. The destination is at $(2, 2, 2)$. The maximum safe distance between the agents is $3$. Reward is the normalized negation of the Euclidean distance to the destination. $h = -8.0, \tau = 1, c = 0.7$. This task is extremely challenging as the quadcopters are uncoordinated and do not have any information about each other. We run 10 independent experiments in this domain. Each experiment consists of $100$ episodes. Each episode terminates when any safety constraint is violated or when the number of time steps reaches $100$. We compare our method with Safe Q-Agent, as it is the only baseline that specifically accounts for both individual and joint safeties. Each experiment is initialized with one of the agents of interest, and an $\epsilon$-greedy Q-learning agent with $\epsilon=0.1$.

\begin{figure}[t]
	\centering
	\includegraphics[width=\columnwidth]{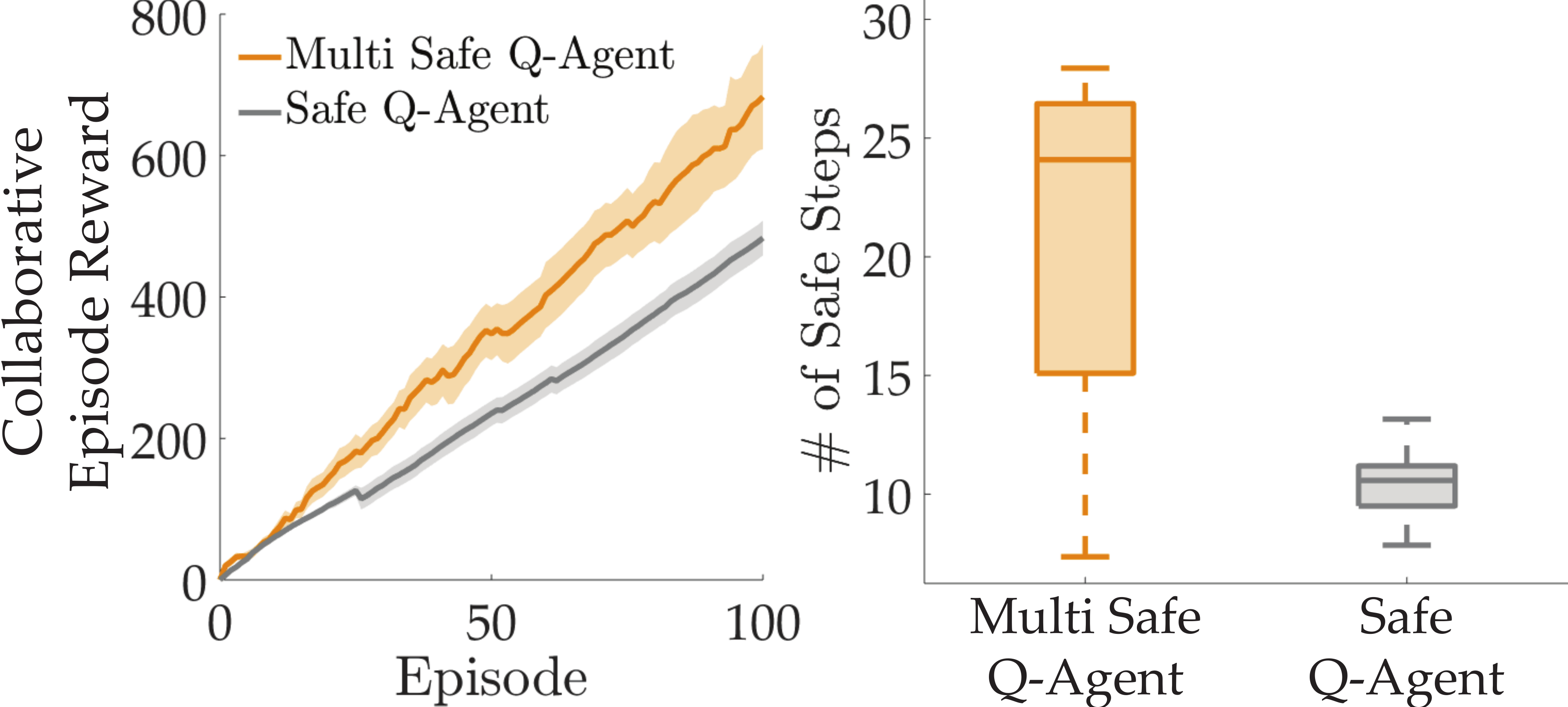}
	\vspace{-13px}
	\caption{Quadcopter Experiment Results (mean$\pm$s.e. for the line plots). The box plot shows the distribution of average number of safe steps across $10$ experiment runs before termination due to unsafety. Each experiment's number of safe steps is averaged over $100$ episodes.}
	\vspace{-15px}
	\label{fig.quadcopter}
\end{figure}
Figure~\ref{fig.quadcopter} shows the total episode reward of the agents and also the number of safe steps before the task is failed. While both agents collect higher rewards by learning from previous episodes, Multi Safe Q-Agent clearly outperforms the Safe Q-Agent in both total reward and safety.



\section{Discussion}
\textbf{Summary.} In this paper, we presented a decentralized planning algorithm that enables agents to avoid stepping into \emph{individually} or \emph{jointly} unsafe states. We use a GP-based approach to estimate safety and uncertainty. Our algorithm assumes very little prior knowledge on other agents and learns their policies through observations. We showed the algorithm has polynomial time complexity in the number of visited states, available actions, and agents. We also gave a performance guarantee on the estimated $Q$-functions of other agents. We empirically demonstrate our algorithm in collaborative Mars rover and quadcopter experiments. Our results suggest our algorithm outperforms all other baselines in terms of safety and also has the best performance in the exploitation setting. 
We would like to emphasize that although there is extensive work in the area of decentralized planning and control, most previous multi-agent safe learning work focuses either on precomputed policies or is not decentralized \cite{julian2018deep, vrohidis2017safe}. Hence, a fair comparison would be only with algorithms related to our problem such as SafeMDP \cite{turchetta2016safe}.

We would also like to note that our work extends to human-robot collaboration tasks where each human can be modeled in the same decentralized way. For example, in underwater robotics \cite{khatib2016ocean}, robots assist scuba divers to complete the tasks where they share the same reward and can use the diver's vital failure, e.g. running out of oxygen, as the joint unsafety. Another example is that surgical robots can share the reward with surgeons and use the patient's condition to define joint unsafety. This method also has the potential to go beyond robotics and can be applicable in other multi-agent AI settings such as the game of Dota and League of Legends, where a team of agents would share a common reward and the joint unsafety would be the home being attacked by the opponents.

\noindent\textbf{Limitations and Future Work.}
There are a few limitations that we plan to address as part of future work. The algorithm only avoids immediate jointly unsafe states, but does not plan on other agents' potential trajectories. One can easily imagine a scenario where the joint unsafe state is guaranteed to occur multiple steps ahead. A potential opportunity here is to have a multi-step trajectory roll out and select the safest option. Another strong assumption we make is that reward function is shared across agents. This assumption allows for adequate online learning and quick reaction from our agent. However, in some of the real-world collaborative tasks, this assumption may not hold. Two potential solutions are to directly learn other agents policies or rewards. Third, one could improve the modeling of other agents' policies by using different learning algorithms rather than GPs. Finally, an empirical validation of our algorithm on real-world experiments is required.
 
\noindent\textbf{Conclusion.} Despite these limitations, we are encouraged to see our algorithm demonstrating safe behavior in collaboration with other agents with very little prior knowledge of their policies. We also look forward to exploring applications of our algorithm beyond collaborative navigation to other multi-agent partially observable settings such as in manipulation or in human-robot interaction.


\section*{Acknowledgments}
Toyota Research Institute ("TRI")  provided funds to assist the authors with their research but this article solely reflects the opinions and conclusions of its authors and not TRI or any other Toyota entity. We acknowledge funding by FLI grant RFP2-000 and the DARPA Assured Autonomy program.



%

\bibliographystyle{unsrtnat}
\bibliography{ref}

\end{document}